\documentclass[]{style/iromlab}





\DeclareDocumentEnvironment{example}{}{\noindent\textbf{Running example:}\itshape}{}

\usepackage{ifthen}
\newboolean{include-notes}
\newboolean{include-new}
\newboolean{include-remove}
\setboolean{include-notes}{true}
\setboolean{include-new}{false}
\setboolean{include-remove}{false}

\usepackage[dvipsnames]{xcolor}
\usepackage[normalem]{ulem}
\newcommand{\justin}[1]{\ifthenelse{\boolean{include-notes}}{\textcolor{orange}{\textbf{Jaime:} #1}}{}}

\newcommand{\princeton}[1]{\ifthenelse{\boolean{include-notes}}{\textcolor{orange}{#1}}{}}







\usepackage{multicol}
\usepackage{amsmath, amsfonts, amssymb, amsthm}
\usepackage{enumerate}
\usepackage[inline]{enumitem}
\usepackage{mathtools}
\usepackage{graphicx}
\usepackage{longtable,tabularx}
\usepackage{placeins} 
\usepackage{float}
\usepackage{multirow}
\usepackage{bbm}
\usepackage{threeparttable}
\usepackage{balance}
\usepackage{algorithm}
\usepackage{algpseudocode}
\usepackage{adjustbox}
\usepackage{booktabs}
\usepackage{bm}
\usepackage{etoolbox}
\usepackage{microtype}
\usepackage[title]{appendix}
\usepackage{units}
\usepackage{cleveref}
\usepackage{xspace}
\usepackage{tcolorbox}
\usepackage{caption}
\usepackage{wrapfig}

\definecolor{claude_color}{HTML}{F89E62}
\definecolor{deepseek_color}{HTML}{78B6E8}
\definecolor{o3_mini_color}{HTML}{6CD5A1}
\definecolor{red_color}{HTML}{E13B55}
\definecolor{box_color}{HTML}{f7efe9}

\tcbset{
  promptstyle/.style={
    colback=gray!10,
    colframe=gray!60,
    boxrule=0.5pt,
    arc=2pt,
    outer arc=2pt,
    left=4pt,
    right=4pt,
    top=4pt,
    bottom=4pt,
    fonttitle=\bfseries,
    sharp corners=south,
  }
}

\tcbset{
  box_style/.style={
    colback=box_color!40,
    colframe=box_color!80,
    coltitle=black,
    boxrule=1.0pt,
    arc=2pt,
    outer arc=2pt,
    left=4pt,
    right=4pt,
    top=-10pt,
    bottom=4pt,
    fonttitle=\bfseries,
  }
}

\newtheorem{proposition}{Proposition}
\newtheorem{definition}{Definition}
\newtheorem{corollary}{Corollary}

\usepackage{xcolor}
\definecolor{aniRed}{HTML}{D00000}

\newbool{extended}
\setbool{extended}{false}

\makeatletter
\newcommand{\longdash}[1][2em]{%
  \makebox[#1]{$\m@th\smash-\mkern-7mu\cleaders\hbox{$\mkern-2mu\smash-\mkern-2mu$}\hfill\mkern-7mu\smash-$}}
\makeatother
\newcommand{\omitskip}{\kern-\arraycolsep}

\author[]{Anirudha Majumdar}

\affiliation[]{Princeton University}
\contribution[]{\texttt{ani.majumdar$@$princeton.edu}}

\begin{document}

\title{{\LARGE Deceptive Risk Minimization: Out-of-Distribution Generalization by Deceiving Distribution Shift Detectors}}

\abstract{
This paper proposes \emph{deception} as a mechanism for out-of-distribution (OOD) generalization: by learning data representations that make training data \emph{appear} independent and identically distributed (iid) to an observer, we can identify stable features that eliminate spurious correlations and generalize to unseen domains. We refer to this principle as \emph{deceptive risk minimization} (DRM) and instantiate it with a practical differentiable objective that simultaneously learns features that eliminate distribution shifts from the perspective of a detector based on conformal martingales while minimizing a task-specific loss. In contrast to domain adaptation or prior invariant representation learning methods, DRM does not require access to test data or a partitioning of training data into a finite number of data-generating domains. We demonstrate the efficacy of DRM on numerical experiments with concept shift and a simulated imitation learning setting with covariate shift in environments that a robot is deployed in. 
}





\metadata[Project Website]{\href{https://deceptive-risk.github.io}{deceptive-risk.github.io}}
\metadata[Code]{\href{https://github.com/irom-princeton/deceptive-risk-minimization}{github.com/irom-princeton/deceptive-risk-minimization}}

\maketitle


\section{Introduction}
\label{sec:intro}

Is there an unbridgeable gap between in-distribution (ID) and out-of-distribution (OOD) generalization in machine learning? Or can the distinction be erased by a change in perspective? Traditional wisdom holds that there is a vast chasm between the two settings. 
Applications where training environments are representative of test environments (e.g., via careful curation of large-scale datasets) have seen remarkable empirical progress and real-world impact. This success is backed by a deep theoretical understanding of ID generalization from decades of progress in statistical learning theory~\cite{shalev2014understanding}. However, in settings where it is challenging to cover all relevant dimensions of variation exhaustively in the training data --- a common occurrence in real-world applications such as robotics, healthcare, and cybersecurity --- high-capacity models can absorb spurious correlations and fail catastrophically in test settings where these correlations are altered or even reversed~\cite{liu2021towards, sinha2022system, li2025probing, arjovsky2019invariant}.  

In this paper, we take an \emph{observer-centric} viewpoint on the gap between ID and OOD generalization. Our starting point is the following basic observation: from the perspective of an external observer who cannot discern distribution shifts in a sequence of data, \emph{OOD generalization is equivalent to ID generalization}. As an example, consider an observer responsible for overseeing the performance of a robot operating in a warehouse. The robot is presented with a sequence of objects to place into a receptacle, while the observer records the corresponding sequence of bits denoting success (1) or failure (0) of the robot on each object. During this process, the robot encounters changes to its lighting conditions, appearances of objects, and its visual backdrop. However, the robot is able to maintain reliable performance throughout these changes, with only a small-but-consistent failure probability. As a result, the observer is completely oblivious to the distribution shifts faced by the robot: the data recorded by the observer has shed its spurious, domain-specific cues and appears independent and identically distributed (iid). In a sense, the robot has \emph{hidden} the distribution shifts from the observer. 


The core idea of this work is to translate this observer-centric perspective on generalization into a prescriptive mechanism for OOD generalization. Suppose that a learner is presented with a sequence of training data that exhibits (potentially mild) distribution shifts. Then, by learning data representations that eliminate these distribution shifts from the perspective of an observer, we can identify stable features that do not rely on spurious correlations and generalize to unseen domains. 
For example, if the robot encounters periodic changes in lighting conditions or visual backdrops (Fig.~\ref{fig:anchor}), an encoding of observations that hides these changes from a distribution shift detector will eliminate sensitivity to the changes and result in robust performance when spurious correlations in training data are significantly exaggerated or reversed. 

Concretely, we formulate this learning mechanism as an adversarial game (Fig.~\ref{fig:anchor}) which we refer to as \emph{deceptive risk minimization} (DRM). An encoder network learns to generate representations that support minimization of a task-specific loss while simultaneously eliminating distribution shifts from the perspective of a detector presented with a sequence of learned representations. We assume that this sequence is presented in the order in which training data were curated, and thus preserves the structure of natural distribution shifts in the original data. For example, in robotics, training data are often collected in environments that vary over time either discretely (e.g., a change in the color of the table) or continuously (e.g., a continuous change in ambient outdoor lighting over a day). We are interested in settings where the training data sequence exhibits some distribution shifts, which are exaggerated or reversed at deployment time. Importantly, unlike prior work in domain adaptation~\cite{ben2010theory, ganin2016domain, ganin2015unsupervised, zhang2015multi, long2018conditional, gong2016domain, li2018deep, courty2016optimal} or invariant representation learning~\cite{arjovsky2019invariant, peters2016causal, ahuja2020invariant, krueger2021out}, we do not assume access to any data from test environments or that training data are partitioned into different domains corresponding to data-generating distributions; we simply assume that the order of data is preserved. Associating data points with a finite set of domains --- either manually or via unsupervised clustering~\cite{le2025invariant, murata2025clustered} --- is often impractical or unachievable in settings where there is a continuous change in conditions.

\begin{figure}
    \centering
    \includegraphics[width=1\linewidth]{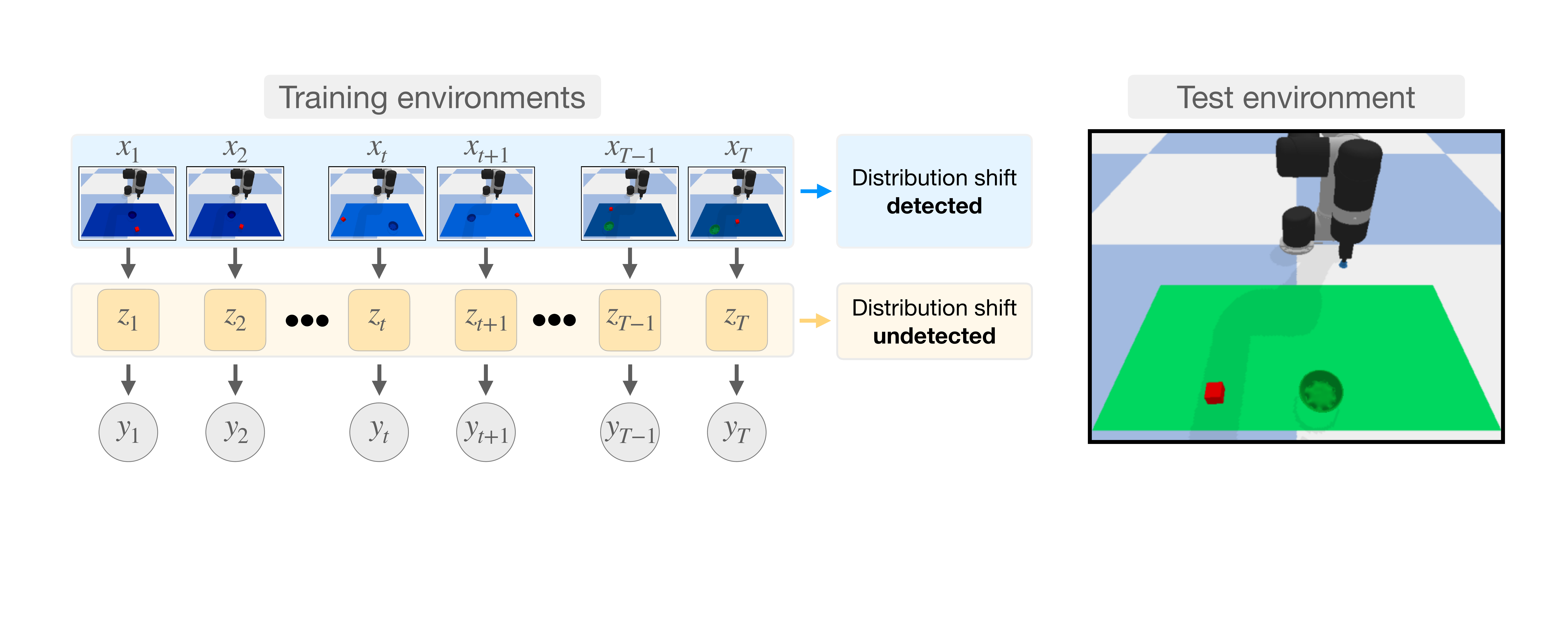}
    \caption{Deceptive risk minimization (DRM): by learning data representations that make training data \emph{appear} independent and identically distributed to an observer (left), we can identify stable features that eliminate spurious correlations and generalize to unseen domains (right). In the figure, the robot's training environments undergo small-but-structured changes in the colors of the table and objects; DRM learns a representation that is insensitive to these changes, which results in generalization to environments with significantly different appearances.}
    \label{fig:anchor}
\end{figure}

We present a practical instantiation of DRM that utilizes \emph{conformal martingales} (CMs)~\cite{vovk2005algorithmic, vovk2003testing, vovk2021testing} for distribution shift detection. CMs offer a general and flexible approach to distribution shift detection, which is often highly effective in practical scenarios~\cite[Ch. 8]{vovk2005algorithmic}. Concretely, the CM approach computes a quantity that remains small when data are iid (or exchangeable), but that can grow quickly in the presence of distribution shifts. We derive an end-to-end differentiable loss that penalizes the conformal martingale computed on encoded inputs; this loss serves to train the encoder to learn representations that eliminate distribution shifts from the perspective of the CM-based detector. 

{\bf Summary of contributions.} We introduce deceptive risk minimization (DRM): a novel learning principle that estimates representations that eliminate spurious correlations by deceiving distribution shift detectors. 
We develop a practical instantiation of DRM via a differentiable loss that penalizes conformal martingales, and demonstrate the efficacy of this representation learning objective in different empirical settings involving covariate and concept shift. Conceptually, DRM creates a bridge between distribution shift detection and OOD generalization, which we hope future work can build on to unlock practical methods for OOD generalization in real-world applications. 

\newpage
\section{Prelude: randomness and structure in the eye of the beholder}
\label{sec:prelude}


We begin with a philosophical perspective that will shape our algorithmic approach to OOD generalization. Consider the following sequence of digits:
\begin{figure}[H]
\vspace{-7pt}
    \centering
    \includegraphics[width=1\linewidth]{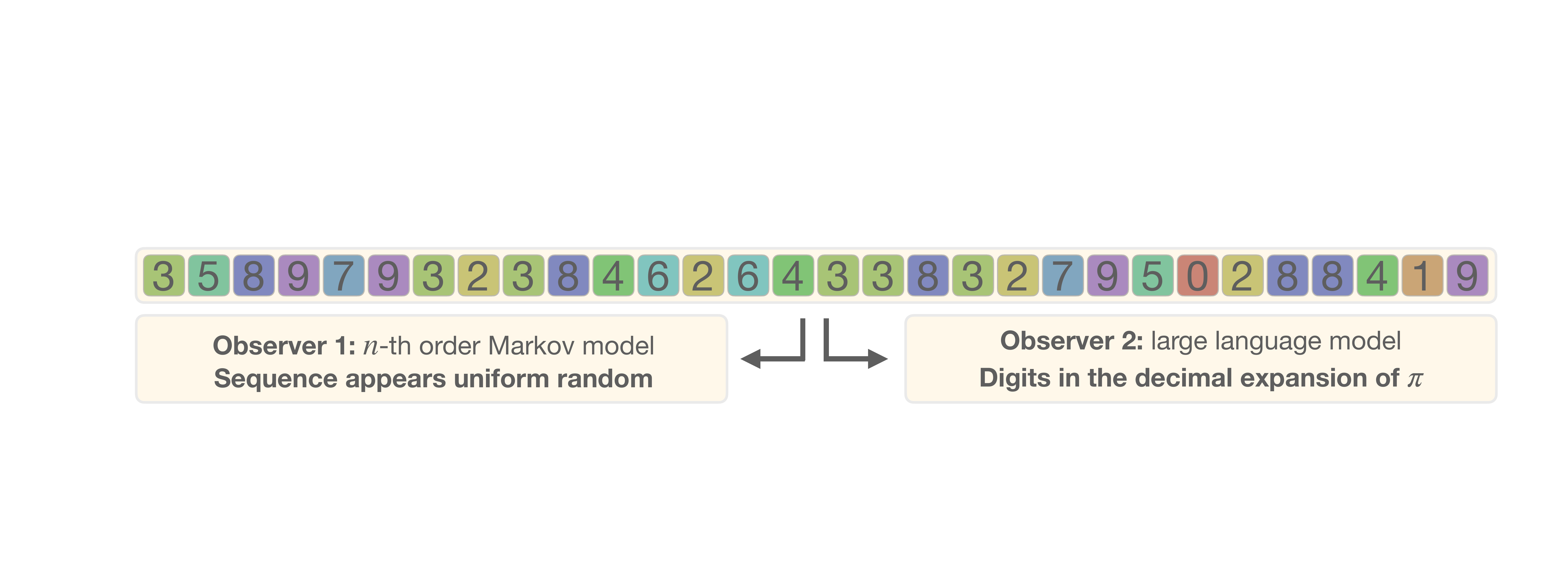}
\vspace{-15pt}
\end{figure}

An observer who fits an $n$-th order Markov model to this sequence observes no discernible pattern; the digits appear randomly and uniformly distributed conditioned on past inputs. However, an observer in the form of a large language model identifies the sequence as being \emph{deterministically} generated; the sequence forms the decimal expansion of $\pi$ from the tenth place onward.

This simple example illustrates the \emph{observer-centric} nature of randomness. A given observer may discern deterministic patterns in data that are imperceptible to a different observer. This observer-centric (or \emph{subjective}) view has spurred decades of thinking on the nature of randomness~\cite{gillies2012philosophical}. Does randomness have an objective existence in the physical world, or is all randomness subjective~\cite{de2017theory}? The observer-centric perspective also informs the notion of \emph{pseudorandmness}~\cite{vadhan2012pseudorandomness}, which plays a central role in crypography and theoretical computer science more broadly. Pseudorandom sequences are data that ``appear random" in the sense that a computationally efficient observer cannot distinguish such sequences from random sequences. 

In this paper, we adopt an observer-centric viewpoint on the bedrock assumption of machine learning theory and practice --- that data are independent and identically distributed (iid)\footnote{A single data sequence is not sufficient to distinguish between iid random variables and exchangeable random variables~\cite{ramdas2022testing}; as a result, in order to keep the exposition simple, we do not distinguish between the two here.} --- and use it to formulate a new mechanism for OOD generalization. The key idea is to learn representations that make data \emph{appear} iid from the perspective of an observer. We provide an illustrative example below in order to explain the key intuitions.  

\subsection{An illustrative example}
\label{sec:illustrative example}

\begin{wrapfigure}{r}{0.6\textwidth}
    \centering
    \includegraphics[width=0.6\textwidth]{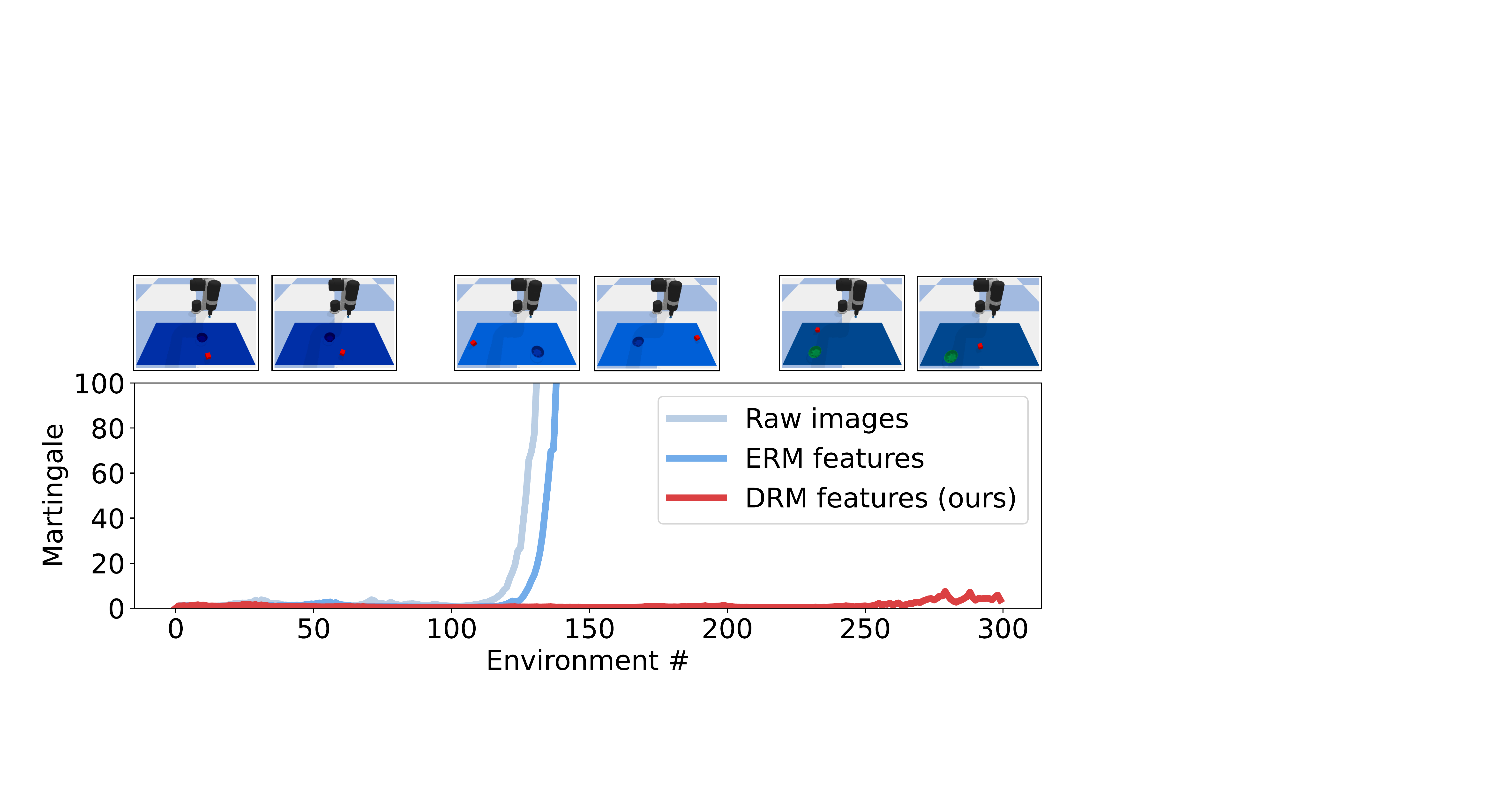} 
    \caption{A detector based on conformal martingales rapidly detects the distribution shift given the sequence of raw images or features computed via ERM; in contrast, features from DRM appear iid. \label{fig:martingales}}
    \label{fig:wrapped}
    \vspace{-10pt}
\end{wrapfigure}
Consider an imitation learning setting where a human has provided a sequence of examples to a robot demonstrating how to perform a given task (Fig.~\ref{fig:anchor}). These demonstrations are provided in environments that exhibit a small amount of distribution shift. Specifically, the color of the table is varied \emph{slightly} in a structured way: a third of the demonstrations are provided with one table-bowl color combination, the next third with a slightly different color combination, and the final third with another. The standard approach to learning a policy in such a setting is empirical risk minimization (ERM): \emph{assume} that data are iid and learn a mapping from the robot's observations to actions by minimizing a behavior cloning loss on training data. Such a policy performs well when deployed with table colors similar to ones seen during training, but fails with colors that are significantly different (see Sec.~\ref{sec:imitation learning} for numerical results). 

The starting point for our approach is the observation that the non-iid nature of data in this setting can be \emph{inferred from the sequence of training environments}. Fig.~\ref{fig:martingales} shows the output from a distribution shift detector based on conformal martingales (described formally in Sec.~\ref{sec:conformal martingales}) computed on observations from the training environments. This detector spikes strongly once the table color is changed. Crucially, the detector also spikes when provided with the sequence of latent features from the policy computed via ERM, indicating that the policy's latent representation encodes color information. 

Now consider a policy that eliminates the distribution shift from the perspective of an observer who is only presented with latent representations from the policy. Intuitively, the data can be made to ``appear iid" by eliminating sensitivity to the table color, which leads to OOD generalization to different table colors. 

\section{Problem formulation}
\label{sec:problem formulation}

{\bf Training data.} We begin by formalizing the supervised learning setting in which we consider OOD generalization. A learner is presented with a \emph{sequence} of training data $((x_t, y_t))_{t=1}^T \in (\mathcal{X} \times \mathcal{Y})^T$ consisting of input-label pairs sampled from a sequence of random variables $((X_t, Y_t))_{t=1}^T$, which may be dependent and non-identically distributed. This sequential collection of data is a core assumption of our work and departs from the standard practice of shuffling data that has been collected~\cite{arjovsky2019invariant}. As an example, data for imitation learning in robotics is often collected sequentially in multiple environments over multiple months~\cite{team2025gemini, intelligence2025pi}. If some of the data collection is parallelized (e.g., by multiple human operators collecting data on different robots on the same day), we assume that this data is serialized by committing to a particular ordering, e.g., $(<$\texttt{Data from operator 1 on day 1}$>$, $<$\texttt{Data from operator 2 on day 1}$>$, \dots,  $<$\texttt{Data from operator 1 on day 2}$>$, $<$\texttt{Data from operator 2 on day 2}$>$, \dots $)$.

{\bf Hypothesis and loss.} Given the training data, the learner produces a hypothesis $h: \mathcal{X} \rightarrow \mathcal{Y}$, which maps inputs to predicted labels. The hypothesis is evaluated according to a loss function $l: \mathcal{Y} \times \mathcal{Y} \rightarrow \mathbb{R}$, which compares predicted labels with ground-truth labels.

{\bf OOD generalization.} The learned hypothesis $h$ is deployed on a sequence of test data drawn from random variables $((X_\tau, Y_\tau))_{\tau=T+1}^{T+T'}$. The overall quality of the hypothesis is measured by its expected loss on test data:
\begin{equation}
    \frac{1}{T'} \sum_{\tau=T+1}^{T+T'} \ \mathbb{E} \ \Big{[}l(h(X_\tau), Y_\tau)\Big{]}.
\end{equation}
In the case where the random variables $(X_t, Y_t)_{t=1}^{T+T'}$ are iid, the formulation above reduces to the standard setting of statistical learning theory. Of interest to us is the case where the training data sequence reflects some (possibly mild) distribution shifts, which are significantly exaggerated or reversed for test data. 

{\bf Covariate and concept shifts.} We consider two types of distribution shifts: (i) covariate shift, where the distribution of inputs changes over time (e.g., changes in the color of the robot's table in the example from Sec.~\ref{sec:illustrative example}), and (ii) (anti-causal) concept shift, where the conditional distribution of $X|Y$ changes over time (e.g., an image classification setting where the appearances of images for a given label change over time). We discuss potential extensions to other types of shifts (e.g., causal concept shift) in Sec.~\ref{sec:discussion}.  
\section{Algorithmic implementation: deceptive risk minimization}
\label{sec:algorithm}

Our goal is to find features that eliminate the distribution shift between training and test settings. Since the learner is only provided with the sequence of training data, we utilize distribution shifts observed in this data as a proxy. Specifically, we learn features that are stable along the training data sequence --- in the sense that they appear iid to an observer --- while also supporting the minimization of the task-specific loss. We formalize this objective by defining an observer in the form of a distribution shift detector.


\subsection{Defining the observer: distribution shift detection}
\label{sec:observer}

Let $(X_1, X_2, \dots)$ be a sequence of input random variables, and let $\phi$ be a mapping from an input $x$ to an encoding $\phi(x) \in \mathbb{R}^{n_d}$. We define an observer $\Delta$ who takes as input a realization $(\phi(x_1), \phi(x_2), \dots)$ from the sequence of random variables $(\phi(X_1), \phi(X_2), \dots)$ and outputs a boolean $\delta \in \{\text{True},\text{False}\}$ indicating if a deviation from the iid hypothesis has been detected. 

\begin{definition}[Practically iid]
A sequence $(\phi(x_1), \phi(x_2), \dots)$ is $\Delta$-\emph{practically iid} if a distribution shift detector $\Delta$ does not trigger\footnote{To keep this definition simple, we restrict attention to deterministic detectors (e.g., constructed by taking a detector that has randomness and fixing the seed).} when provided with this sequence as input, i.e., $\Delta((\phi(x_1), \phi(x_2), \dots)) = \text{False}$.
\end{definition}

{\bf False alarm rate (FAR).} For a detector $\Delta$ to be useful, it should not trigger too often when presented with data drawn from a sequence of iid random variables. This is captured by the false alarm rate (FAR), which is the worst-case probability of detection when data are drawn from an iid sequence of random variables~\cite[Ch. 8]{vovk2005algorithmic}. There are a number of methods for constructing detectors that have a bounded FAR (Sec.~\ref{sec:related work}); in this work, we will specifically utilize conformal martingales~\cite{vovk2021testing, vovk2005algorithmic}. 

\subsection{Formalizing the DRM objective} 

Next, we formulate the objective of deceptive risk minimization (DRM) as a constrained optimization problem. We consider hypotheses $h: x \mapsto \phi(x) \mapsto f(\phi(x)) \in \mathcal{Y}$, which encode inputs using $\phi$ and map these to labels via $f$. We use the notation $(\phi(x_t) | y)_{t=1}^T$ to denote the subsequence of $(\phi(x_t))_{t=1}^T$ with labels equal to $y$. The following optimization problem minimizes the task-specific loss $l$ while searching for a representation that makes the training data $\Delta$-practically iid. Our optimization problem considers two types of constraints, which are aimed at tackling covariate shift and concept shift respectively. 

\begin{tcolorbox}[box_style, title=Deceptive risk minimization (DRM)]
\begin{align}
\inf_{f, \phi} \ \ &\frac{1}{T} \sum_{t=1}^T l(x_t, f(\phi(x_t)) \nonumber \\
\text{s.t.} \ \ &(\phi(x_t))_{t=1}^T \ \text{is} \ \Delta\text{-practically iid} \quad \text{[covariate shift]} \label{eq:covariate shift constraint} \\
&(\phi(x_t) | y)_{t=1}^T \ \text{is} \ \Delta\text{-practically iid}, \forall y \in \mathcal{Y} \quad \text{[concept shift]}. \label{eq:concept shift constraint}
\end{align}
\end{tcolorbox}

\subsection{Instantiation with conformal martingales}
\label{sec:conformal martingales}

We now describe a particular distribution shift detector $\Delta$ based on conformal martingales (CMs). This will allow us to flexibly handle both covariate and concept shifts. In addition, this detector will allow us to formulate a \emph{differentiable} surrogate for the constraints \eqref{eq:covariate shift constraint} and \eqref{eq:concept shift constraint} in the DRM optimization problem.

Intuitively, the CM approach constructs a quantity that grows quickly when random variables are not iid, and remains small otherwise. In order to detect covariate shift, we first assess how well every encoded data point $\phi(x_i)$ (with $i \leq t$) \emph{conforms} to the sequence of data points $(\phi(x_{j}))_{j=1}^{t}$ observed up to time $t$ using a conformity score:
\begin{equation}
    \alpha_i^\text{covariate} := \min_{j \in \{1,\dots,t\}: j \neq i} \ \ d(\phi(x_i), \phi(x_j)), \label{eq:conformity score covariate}
\end{equation}
where $d: \mathbb{R}^{n_d} \times \mathbb{R}^{n_d} \rightarrow \mathbb{R}^+$ captures how different two encoded data points $\phi(x_i)$ and $\phi(x_j)$ are. In our numerical experiments, we will use the cosine distance or its sharpened form~\cite{ahmad2024scsnet}:
\begin{equation}
    d(z, z') := 1 - \text{sign}(z \cdot z') \left| \frac{z \cdot z'}{\|z\|\|z'\|} \right|^\gamma,
\end{equation}
where $\gamma=1$ (standard cosine distance) or $\gamma=2$. In cases where we are interested in detecting concept shift rather than covariate shift, we will alternately use a label-conditioned conformity score:
\begin{equation}
    \alpha_i^\text{concept} := \min_{j \in \{1,\dots,t\}: j \neq i, y_j = y_i} \ \ d(\phi(x_i), \phi(x_j)). \label{eq:conformity score concept}
\end{equation}
The conformity scores are used to compute conformal p-values for each $t \leq T$:
\begin{align}
    &p_t^\text{covariate} := \frac{|\{i | 1 \leq i \leq t,  \alpha_i < \alpha_t\}| + \xi_t | \{i | 1 \leq i \leq t, \alpha_i = \alpha_t\}|}{t}, \label{eq:p-values-covariate} \\
    &p_t^\text{concept} := \frac{|\{i | 1 \leq i \leq t,  \alpha_i < \alpha_t, y_i=y_t\}| + \xi_t | \{i | 1 \leq i \leq t, \alpha_i = \alpha_t, y_i=y_t\}|}{|\{i | 1 \leq i \leq t, y_i=y_t\}|}, \label{eq:p-values-concept}
\end{align}
where $\xi_t \in [0,1]$ is sampled independently from the uniform distribution on $[0,1]$. 

As shown in \cite[Ch. 2]{vovk2005algorithmic}, the p-values $p_t^\text{covariate}$ and $p_t^\text{concept}$ are independent and uniformly distributed in $[0,1]$ in the \emph{absence} of covariate and concept shift respectively. Thus, the CM approach constructs a quantity that measures how far away from being uniformly and independently distributed the p-values are. This is achieved using a betting martingale~\cite[Ch. 8]{vovk2005algorithmic}, whose computation is shown in Algorithm~\ref{alg:betting martingale}. 
\begin{algorithm}[H]
\SetAlgoLined
{\bf Inputs: } p-values $p_1, p_2, \dots, p_T$ (computed using \eqref{eq:p-values-covariate} or \eqref{eq:p-values-concept}) \\
{\bf Outputs: } martingale values $S_1, S_2, \dots, S_T$ \\
Define constants $E := \{-1, -0.5, 0, +0.5, +1\}$, \ $\mu := 0.005$ \\
Initialize $C \leftarrow 1$; $C_e \leftarrow 1/|E|, \ \forall e \in E$ \\
 \For{$t=1$ to $T$}{
  {\bf for} $e \in E$ {\bf do}: $C_e \leftarrow (1-\mu)C_e + (\mu/|E|)C$ \\
  {\bf for} $e \in E$ {\bf do}: $C_e \leftarrow C_e [1 + e(p_t - 0.5)]$ \\
  $C \leftarrow \sum_{e \in E} \ C_e$ \\
  $S_t \leftarrow C$
 }
 \caption{Betting martingale~\cite[Ch. 8]{vovk2005algorithmic} \label{alg:betting martingale}}
\end{algorithm}
Intuitively, the betting martingale represents the capital of a bettor who gambles \emph{against} the hypothesis that random variables are iid.
Large values of the betting martingale values $S_t$ thus serve as an indicator of distribution shift. Conversely, in the \emph{absence} of distribution shift, $S_t$ is guaranteed to remain small with high probability. 

\begin{proposition}[False alarm control~\cite{vovk2005algorithmic}] 
Consider a detector $\Delta^\text{CM}$ which is triggered if the betting martingale ever exceeds a threshold $1/\alpha$. This detector controls the false alarm rate to level $\alpha$, i.e., the probability that $\Delta^\text{CM}$ triggers when the random variables $(\Phi(X_t))_{t=1}^T$ (or $(\Phi(X_t) | y)_{t=1}^T$ in the case of concept shift) are iid is upper bounded by $\alpha$. 
\end{proposition}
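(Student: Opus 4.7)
My plan is to reduce the claim to Ville's maximal inequality for nonnegative martingales, applied to the process $(S_t)$ produced by Algorithm~\ref{alg:betting martingale}. The three ingredients to assemble are: first, under the iid null hypothesis the conformal p-values $(p_t)$ are themselves iid uniform on $[0,1]$; second, $(S_t)$ is then a nonnegative martingale (with respect to the filtration generated by the p-values) with $\mathbb{E}[S_0]=1$; third, Ville's inequality then gives $P(\sup_{t\leq T} S_t \geq 1/\alpha) \leq \alpha$, which is exactly the false alarm bound on $\Delta^{\text{CM}}$.

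For the p-value uniformity in the covariate case, I would invoke the standard conformal argument recorded in \cite[Ch.~2]{vovk2005algorithmic}: if $(\Phi(X_t))_{t=1}^T$ is iid (hence exchangeable), then conditionally on the multiset of encodings the rank of $\alpha_t$ among $(\alpha_1,\dots,\alpha_t)$ is uniform on $\{1,\dots,t\}$, and the randomized tie-breaking by the independent $\xi_t\sim\mathrm{Unif}[0,1]$ promotes this to $p_t^{\text{covariate}}\sim\mathrm{Unif}[0,1]$ with independence across $t$. For the concept-shift statement, the same argument is applied to each label-indexed subsequence, using that $(\Phi(X_t)\mid y)_{t=1}^T$ is iid by hypothesis. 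For the martingale property, I would verify the two sub-steps of Algorithm~\ref{alg:betting martingale} separately. The mixing step $C_e \leftarrow (1-\mu)C_e + (\mu/|E|)C$ is deterministic given the history and preserves $\sum_{e\in E} C_e = C$ by a one-line summation, so it does not alter the conditional expectation. The betting step multiplies each $C_e$ by $1 + e(p_t - 0.5)$; since $|e|\leq 1$ and $|p_t - 0.5|\leq 1/2$ this factor lies in $[1/2, 3/2]$, preserving nonnegativity, and since $p_t$ is uniform on $[0,1]$ we have $\mathbb{E}[1+e(p_t-0.5)] = 1$ for every $e\in E$. Summing over $e$ and taking conditional expectation with respect to $\mathcal{F}_{t-1}=\sigma(p_1,\dots,p_{t-1})$ yields $\mathbb{E}[S_t\mid \mathcal{F}_{t-1}]=S_{t-1}$, while the initialization $C_e = 1/|E|$ gives $S_0 = 1$.

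The hard part, I expect, is the p-value step in the concept-shift setting: one needs the label-conditioned p-values \eqref{eq:p-values-concept} to be jointly (not just marginally) iid uniform, which requires running the conformal exchangeability argument inside each label class and then carefully tracking the filtration so that the martingale property of Algorithm~\ref{alg:betting martingale} composes correctly when label classes are interleaved in time. Once this is settled, the third step is a one-line application of Ville's inequality to the nonnegative martingale $(S_t)$ with mean one, and the bound on the hitting probability of the threshold $1/\alpha$ by $\Delta^{\text{CM}}$ follows immediately.
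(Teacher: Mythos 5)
Your proposal is correct and is exactly the standard argument behind the cited result: the paper itself gives no proof of this proposition (it is imported wholesale from Vovk et al.'s book), and the chain you assemble---conformal p-values are iid uniform under exchangeability by the randomized-rank argument, the mixing step preserves $\sum_{e}C_e=C$ while the betting step has conditional expectation one and keeps $S_t$ nonnegative, and Ville's inequality applied to the resulting mean-one nonnegative martingale yields the level-$\alpha$ false alarm bound---is precisely the proof in that reference. Your flag on the concept-shift case is also the right one to raise: the label-conditioned p-values in \eqref{eq:p-values-concept} are jointly iid uniform because the exchangeability argument runs within each label class and the newest point's rank within its class is uniform given the past regardless of how classes interleave, so the martingale property composes as you describe.
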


\subsection{Making the objective differentiable}
\label{sec:differentiable objective}

Conformal martingales suggest a practical method to instantiate the DRM optimization problem: replace the hard constraints \eqref{eq:covariate shift constraint} and \eqref{eq:concept shift constraint} with soft constraints that penalize large values of the betting martingale. The only remaining hurdle is to make the martingale computation differentiable. The steps in Algorithm~\ref{alg:betting martingale} are differentiable, and hence the only sources of non-differentiability are in the computation of the conformity scores $\alpha_i$ (\eqref{eq:conformity score covariate} or \eqref{eq:conformity score concept}) and the p-values $p_t$ (\eqref{eq:p-values-covariate} or \eqref{eq:p-values-concept}). We follow a procedure similar to~\cite{stutz2021learning}, which differentiates through the calibration procedure of conformal prediction. We replace the minimization operation in \eqref{eq:conformity score covariate} or \eqref{eq:conformity score concept} by the standard soft-min operation. The computation of $\{i | 1 \leq i \leq t,  \alpha_i < \alpha_t\}$ (or $\{i | 1 \leq i \leq t,  \alpha_i < \alpha_t, y_i=y_t\}$) is equivalent to the computation of a quantile. This can be approximated by smoothed sorting methods~\cite{blondel2020fast, cuturi2019differentiable}, which have a ``dispersion" hyperparameter $\sigma$ such that smooth sorting approaches hard sorting as $\sigma \rightarrow 0$. 

We thus formulate the practical instantiation of DRM as follows by optimizing a weighted combination of the task-specific supervised learning loss (e.g., cross-entropy) and the soft martingale values. 

\vspace{10pt}

\begin{tcolorbox}[box_style, title=Deceptive risk minimization: differentiable objective]
\begin{align}
&\inf_{f, \phi} \ \ \frac{1}{T} \sum_{t=1}^T l(x_t, f(\phi(x_t)) + \lambda \frac{1}{T} \sum_{t=1}^T \tilde{S}_t(\phi(x_1), \dots, \phi(x_t)), \label{eq:drm objective differentiable} \\
&\text{where $\tilde{S}_t(\phi(x_1), \dots, \phi(x_t))$ is computed via Algorithm~\ref{alg:differentiable drm}. }
\end{align}
\end{tcolorbox}

\begin{algorithm}[H]
\SetAlgoLined
{\bf Inputs: } sequence of features $(\phi(x_t))_{t=1}^T$ \\
{\bf Outputs: } sequence of soft martingale values $(\tilde{S}_t)_{t=1}^T$ \\
 \For{$t=1$ to $T$}{
 \For{$i=1$ to $t$}{
  Compute $\tilde{\alpha_i} \leftarrow \text{soft}(\alpha_i^\text{covariate})$ by replacing min with soft-min in \eqref{eq:conformity score covariate} [covariate shift] \\
  or $\tilde{\alpha_i} \leftarrow \text{soft}(\alpha_i^\text{concept})$ by replacing min with soft-min in \eqref{eq:conformity score concept} [concept shift]
  }
  Compute $\tilde{p}_t^\text{covariate} \leftarrow \text{soft}(p_t^\text{covariate})$ using \eqref{eq:p-values-covariate} with soft-quantile [covariate shift] \\
  or $\tilde{p}_t^\text{concept} \leftarrow \text{soft}(p_t^\text{concept})$ using \eqref{eq:p-values-concept} with soft-quantile [concept shift] \\
 } 
 Compute $(\tilde{S}_t)_{t=1}^T$ using Algorithm~\ref{alg:betting martingale} with inputs $(\tilde{p}_t^\text{covariate})_{t=1}^T$ or $(\tilde{p}_t^\text{concept})_{t=1}^T$.
 \caption{Computing the DRM regularizer \label{alg:differentiable drm}}
\end{algorithm}

\subsection{Algorithmic implementation details}
\label{sec:implementation details}

We end our exposition of the DRM algorithm by discussing a few implementation details. 

{\bf Multiple detection sequences.} In settings where the training data sequence is large, we subsample multiple sequences, compute (soft) martingales for each, and average these to form the regularization term in Eq.~\eqref{eq:drm objective differentiable}. This results in improved computational efficiency and robustness compared to computing a single martingale value from the entire training data sequence. 

{\bf Feature normalization.} As described in Sec.~\ref{sec:conformal martingales},  we utilize cosine distances to define conformity scores. Since the resulting conformity scores are only sensitive to directional differences between features, we normalize encodings to have unit norm, i.e., $\|\phi(x)\|_2 = 1$. 

{\bf Warm-starting with ERM.} For the Colored-MNIST example (Sec.~\ref{sec:colored mnist}), we found that warm-starting DRM with a small number of epochs of ERM helped improve performance. This is consistent with the implementation of invariant risk minimization (IRM) in \cite{arjovsky2019invariant}.

\section{Experiments}
\label{sec:experiments}

We evaluate DRM in three sets of experiments, which seek to investigate the following questions:
\begin{enumerate}
\item How effective is DRM in enabling OOD generalization in settings that involve concept shift or covariate shift with spurious correlations in the training data? 
\item Can DRM match the performance of invariant risk minimization (IRM)~\cite{arjovsky2019invariant}, which assumes an oracle partitioning of training data into a finite number of ``environments" corresponding to different data distributions? 
\item How effective is the conformal martingale approach for distribution shift detection, which forms the bedrock of DRM's algorithmic implementation? 
\end{enumerate}

Code for experiments can be found at the \href{https://deceptive-risk.github.io/}{project page}, and hyperparameters are listed in Appendix~\ref{app:hyperparameters}.

\subsection{Concept shift: toy 2D example}
\label{sec:toy example}

We begin with a binary classification task with 2D inputs, where one input dimension correlates strongly but spuriously with the label. Empirical risk minimization (ERM) latches on to this correlation and learns a classifier that relies heavily on the spuriously correlated input dimension. However, when the correlation is reversed at test time, the performance of the ERM classifier degrades significantly. This is a 2D version of the Colored-MNIST task introduced in \cite{arjovsky2019invariant}, which allows us to visualize the features learned by DRM. 

{\bf Training and test distributions.} The learner is presented with a sequence of training data $(x_t, y_t)_{t=1}^T$, where $x_t := [x_{t}^{[1]}, x_{t}^{[2]}]$ is two-dimensional and $y_t \in \{0, 1\}$. The first input dimension $x_{t}^{[1]}$ is drawn from a normal distribution $\mathcal{N}(0, 2^2)$, and a preliminary label $\tilde{y}_t$ is assigned purely as a function of $x_{t}^{[1]}$:
\[
\tilde{y}_t = 
\begin{cases}
1 & \text{if } x_{t}^{[1]} \geq 0, \\
0 & \text{otherwise}.
\end{cases}
\]
The final label $y_t$ is assigned by flipping $\tilde{y}_t$ with probability 0.25. The second dimension $x_{t}^{[2]}$ of the input is constructed so that it strongly correlates with the label. Specifically, we first construct
\begin{equation}
    \tilde{x}_t^{[2]} = y_t^\text{sign} + u \cdot y_t^\text{sign}, 
\end{equation}
where $y_t^\text{sign} = 2y_t - 1$ and $u$ is sampled from the uniform distribution on $[0,1]$. If $u=0$, $\tilde{x}_t^{[2]}$ thus perfectly correlates with the label $y_t$. The presence of $u$ ensures that the correlation is strong (but not perfect). The learner observes $x_t^{[2]}$, which flips the sign of $\tilde{x}_t^{[2]}$ with a probability that varies smoothly from $p_1$ to $p_T$ over time: $p_t = p_1 + (p_T - p_1)(t-1)/(T-1)$, for $t \in \{1,\dots,T\}$. We choose $p_1 = 0$ and $p_T = 0.3$.
This time-varying probability is the source of concept shift in the training data, where the distribution of the input conditioned on the label varies slightly over time. At test time, the correlation between $x_t^{[2]}$ and the label $y_t$ is reversed by choosing a flipping probability $p_\text{test} = 0.9$. We highlight that IRM~\cite{arjovsky2019invariant} and its variants~\cite{wang2022generalizing, krueger2021out, ahuja2020invariant, lu2021nonlinear} --- which assume that data are separated into finitely many domains corresponding to different data-generating distributions --- are not directly applicable here since the data-generating distribution changes \emph{continuously} for the training data. 

{\bf Results.} We train a multi-layer perceptron (MLP) using both ERM and DRM, and utilize the last hidden representation (unit-normalized) as our feature $\phi(x_t)$ for computing the conformal test martingale in DRM (Algorithm~\ref{alg:differentiable drm}). 
Fig.~\ref{fig:all results} (left) compares the performance of ERM with DRM on training and test data (across 10 seeds). The reversal of the spurious correlation results in a dramatic drop in performance on test data for ERM. In contrast, the performance of the classifier learned by DRM is almost entirely unimpacted. This performance also nearly matches an oracle that relies exclusively on $x_t^{[1]}$ for classification, which has a $0.75$ classification accuracy on test data (since labels are first generated based on $x_t^{[1]}$ and then flipped with probability 0.25). 

{\bf Visualizing classifiers.} Fig.~\ref{fig:toy example representations} visualizes the classifiers learned by ERM and DRM. ERM learns a classifier that heavily exploits the spuriously correlated input dimension $x_t^{[2]}$ in order to maximize training performance, which leads to a collapse in performance on the test distribution. In contrast, DRM learns a classifier that relies almost exclusively on the robust input dimension $x_t^{[1]}$. 

\begin{figure}
    \centering
    \includegraphics[width=0.99\linewidth]{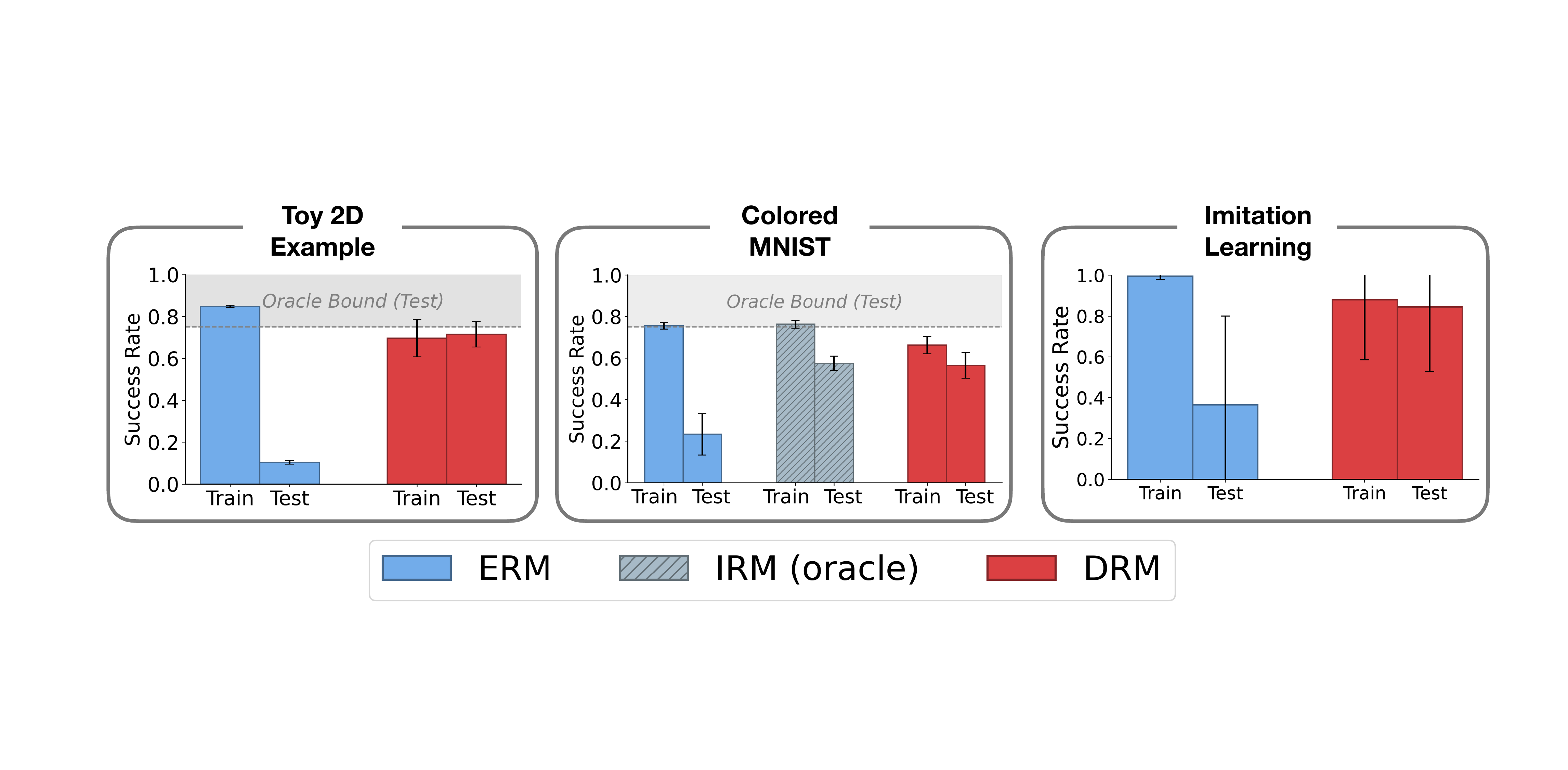}
    \caption{Success rates in our three examples, for training and test settings. ERM latches on to spurious correlations or distractors in each case, which results in severe performance degradation at test time. In contrast, DRM learns stable features that lead to strong generalization. For Colored-MNIST, DRM also matches the performance of IRM, which assumes oracle knowledge of the time at which a distribution shift occurs. IRM is not directly applicable in the 2D example since the distribution shifts \emph{continuously} during training. For the 2D example and Colored-MNIST, the maximal achievable test success for any classifier that sheds spurious correlations is 0.75 (``Oracle Bound (Test)").}
    \label{fig:all results}
\end{figure}

\begin{figure}
    \centering
    \includegraphics[width=0.99\linewidth]{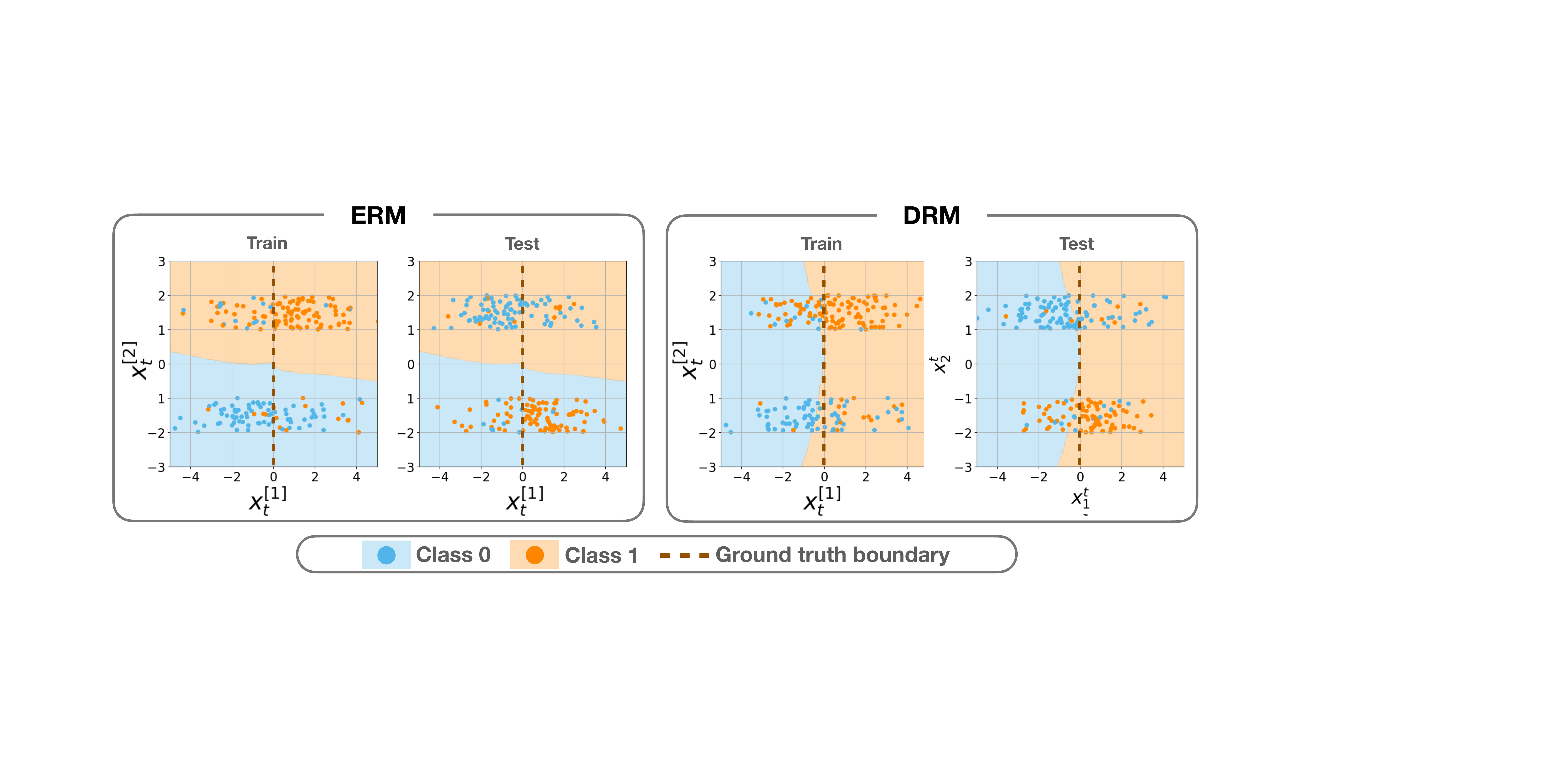}
    \caption{Classifiers learned by ERM and DRM (2D problem). The classification boundary for ERM separates data according to the spurious input dimension $x_t^{[2]}$, which leads to poor performance at test time. DRM disregards $x_t^{[2]}$ almost entirely and learns a classifier that is close to the ground truth ($x_t^{[1]} \geq 0$), leading to strong generalization.}
    \label{fig:toy example representations}
\end{figure}

\subsection{Concept shift: Colored-MNIST}
\label{sec:colored mnist}

Next, we consider the Colored-MNIST task introduced in \cite{arjovsky2019invariant}. The goal is to classify MNIST~\cite{lecun1995learning} images, where the digits have been colored either red or green. Similar to the toy 2D example, the color is assigned in a way that has a strong (but spurious) correlation with the label. As a result, ERM-based methods that only rely on minimizing training loss exploit the color information to make predictions; when the correlation between color and the label is reversed at test time, performance collapses. 

{\bf Training and test distributions.} Concretely, each image is first assigned a preliminary label $\tilde{y} = 0$ for digits $0-4$ and $\tilde{y} = 1$ for digits $5-9$. The final label $y$ is obtained by flipping $\tilde{y}$ with probability 0.25. A color id $c \in \{0, 1\}$ is obtained by flipping $y$ with probability $p_t$, and the image is colored red if $c=1$ and green if $c=0$. The training data sequence consists of examples drawn from two distributions, with the change-point occurring halfway through the data. Specifically, $p_t = 0.1$ for the first half of the data ($t \leq \lceil T/2 \rceil$) and $p_t = 0.4$ for the second half ($t > \lceil T/2 \rceil$). At test time, the probability is chosen to be $p_\text{test} = 0.9$. 

{\bf Results.} We train a convolutional neural network with four layers, and use the (unit-normalized) output of the second layer as our feature $\phi(x_t)$ for computing the DRM martingale penalty (Algorithm~\ref{alg:differentiable drm}). 
Fig.~\ref{fig:all results} (middle) compares the performance of ERM and DRM on the training and test distributions. We also present the performance of IRM, which assumes \emph{oracle knowledge} of the specific point in the training data at which the distribution shift occurs. The reversal of the correlation between the label and the color leads to a significant degradation of performance for ERM. In contrast, DRM achieves a performance that is very similar to IRM, without requiring the training data to be separated into different domains. 

{\bf Visualizing features.} In order to obtain more insight into the representations learned by ERM and DRM, Fig.~\ref{fig:colored-mnist tsne} visualizes the features $\phi^\text{ERM}(x_t)$ and $\phi^\text{DRM}(x_t)$ using their t-SNE embeddings~\cite{maaten2008visualizing}. The embeddings are labeled according to the ground-truth labels (blue: 0, orange: 1) for the corresponding input images, along with the color (red: R or green: G) that was applied to the image. The ERM embeddings form two distinct clusters corresponding to the \emph{color} of the image, confirming that ERM learns to rely almost exclusively on the color rather than the shape of the digit. In contrast, the DRM embeddings are separated based on the label rather than the color. The figure shows a grayscale image colored red or green; these images are mapped to an almost identical t-SNE embedding by DRM, suggesting that DRM has learned to ignore the spurious color information. 

\begin{figure}[t]
    \centering
    \includegraphics[width=0.99\linewidth]{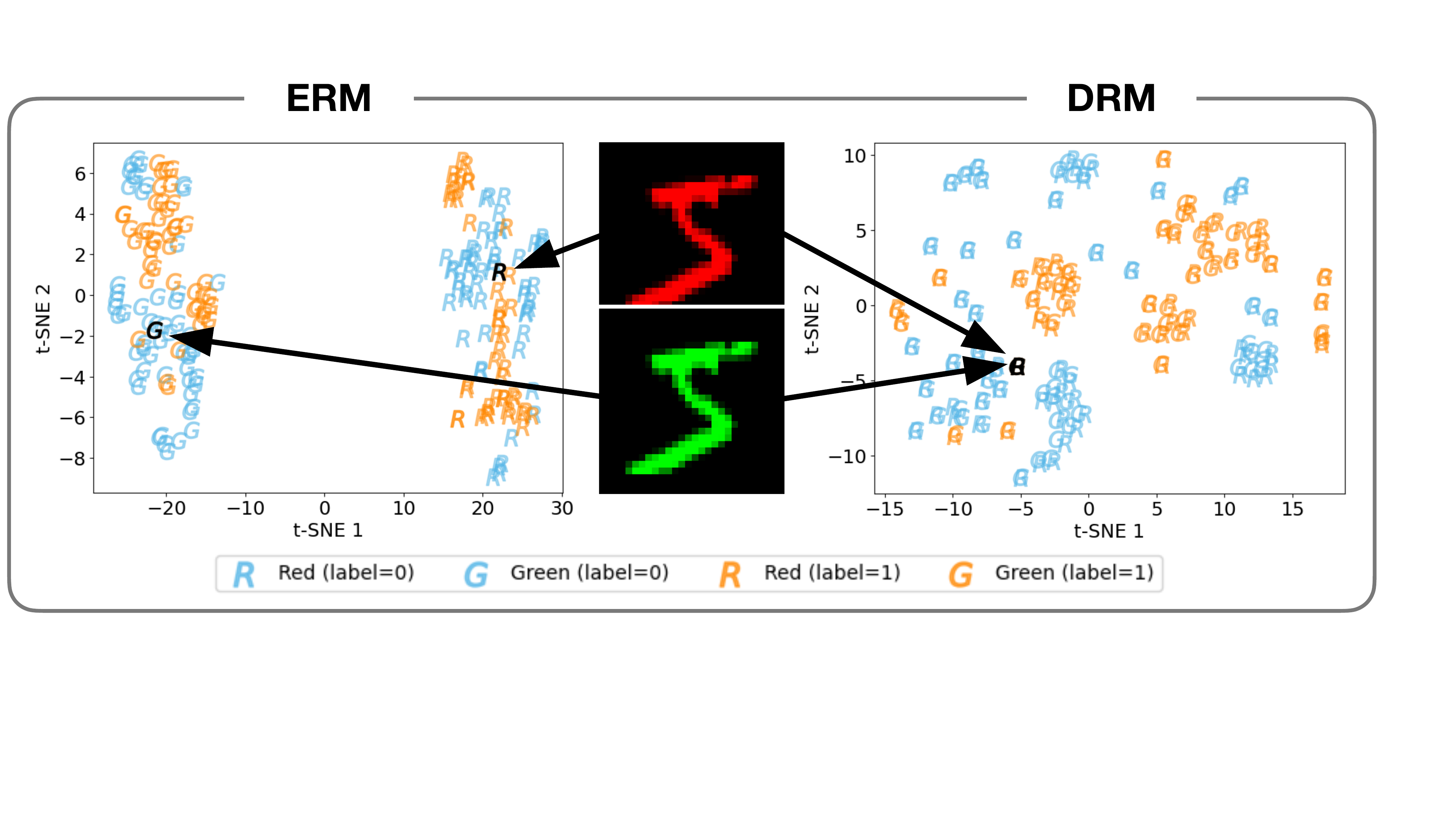}
    \caption{The t-SNE embeddings for features learned by ERM and DRM for Colored-MNIST. ERM embeddings are clustered distinctly by color (R/G). In contrast, DRM embeddings are clustered based on the label, suggesting that DRM has learned to ignore the spurious color information.}
    \label{fig:colored-mnist tsne}
\end{figure}

\subsection{Covariate shift: imitation learning}
\label{sec:imitation learning}


{\bf Training and test distributions.} For our final example, we consider the imitation learning setting from Fig.~\ref{fig:anchor}, which involves covariate shift across environments that the robot is trained and deployed in. The task is to pick up and place a red block into a bowl using observations from an RGB camera. The training data consists of 300 expert demonstrations of pick-and-place locations, which are provided in different environments. A third of the demonstrations are provided with one table-and-bowl color combination (Table RGB: [0, 0.2, 0.7], Bowl RGB: [0, 0, 0.5]), , the next third with a slightly different combination (Table RGB: [0, 0.4, 0.9], Bowl RGB: [0, 0.2, 0.7]), and the final third with another combination (Table RGB: [0, 0.3, 0.6], Bowl RGB: [0, 0.6, 0.3]); these are visualized in Fig.~\ref{fig:anchor}. At test time, the bowl and table background color are changed to a novel combination (Table RGB: [0, 0.9, 0.4], Bowl RGB: [0, 0.7, 0.2]) that significantly exaggerates the variation in green and blue channels seen during training (Fig.~\ref{fig:anchor} right).

{\bf Policy training.} We utilize the transporter network approach~\cite{zeng2021transporter}, which uses two separate neural networks for picking and placing objects. For simplicity, we adopt the same network architecture for both picking and placing the red block (instead of the key-query placing model in \cite{zeng2021transporter}). Each model takes RGB image observations as input. We use residual networks (ResNets)~\cite{He_2016_CVPR} with 36 total layers (convolutional and residual) that form an hourglass encoder-decoder structure. The models are trained to output an image that predicts per-pixel values corresponding to a likelihood that the robot should move to that location for picking / placing. The pick and place models are both trained via a supervised objective in the form of the cross-entropy loss between predicted and demonstrated pick / place locations; demonstration locations are one-hot encoded to form an image that matches the size of the predicted output image. We refer the reader to \cite{zeng2021transporter} for additional details. We find that the picking network is not impacted by the distribution shifts in table and bowl colors (since it is trained to locate the red block, whose color does not change). As a result, we only apply the DRM objective to the placing network. 

{\bf Efficacy of detector.} Fig.~\ref{fig:martingales} visualizes the martingale values on the training data sequence computed using raw image observations, the features learned via ERM, and the features learned via DRM. As the figure illustrates, the conformal martingale-based approach is highly sensitive even to the mild distribution shift that occurs between the first 100 and seconds 100 environments. The martingale value spikes rapidly after the distribution shift for both the raw images and the ERM features. In contrast, the DRM features successfully eliminate the distribution shift from the perspective of the CM. 

{\bf Results.} As shown in Fig.~\ref{fig:all results} (right), deceiving the distribution shift detector results in a policy that is robust to the distribution shift observed between training and testing. In contrast, the near-perfect training performance of a pure behavior cloning objective (ERM) degrades significantly for test environments. 
\section{Related work}
\label{sec:related work}


{\bf Distribution shift detection.} Traditional methods for distribution shift detection use statistical hypothesis testing in order to conclude if a shift has occurred between two sources of data~\cite{gretton2012kernel, rabanser2019failing, kulinski2020feature, farid2024task}. These methods operate in a batch setting, where two datasets (corresponding to training and test distributions) are provided as input for detection. In contrast, DRM relies on \emph{online} methods for distribution shift detection~\cite{vovk2005algorithmic, vovk2003testing, vovk2021testing, ramdas2022testing, shin2022detectors, luo2024online, saha2024testing}, which have been developed relatively recently. These methods are provided with a stream of data, with no demarcation of where a distribution shift may have occurred. Online detectors operate by constructing martingales that test for deviations from the hypothesis of iid (or, more precisely, exchangeable) random variables. In addition to conformal martingales~\cite{vovk2005algorithmic, vovk2003testing, vovk2021testing}, there are also methods based on universal inference~\cite{ramdas2022testing}, e-processes~\cite{shin2022detectors}, and recency prediction~\cite{luo2024online, saha2024testing}. Specialized variants of these approaches have also been developed for the problem of \emph{changepoint detection}~\cite[Ch. 8]{vovk2005algorithmic}, where a distribution shift occurs at a particular (but unknown) time. In addition, theoretical work has characterized the efficiency with which various methods detect distribution shifts~\cite{shin2022detectors, ramdas2022testing}. Our work creates a bridge between the problem of \emph{detecting} distribution shifts and that of \emph{generalizing} to distribution shifts. 

{\bf Domain generalization, invariance, and causality.} Our work is closely related to invariant risk minimization (IRM)~\cite{arjovsky2019invariant}, and the significant amount of subsequent work that it inspired (see, e.g., ~\cite{wang2022generalizing, krueger2021out, ahuja2020invariant, lu2021nonlinear}). IRM and its variants seek to find representations that underlie \emph{causal} mechanisms~\cite{scholkopf2021toward, peters2016causal, peters2017elements} that generate data. Specifically, IRM advocates for a particular kind of \emph{invariance} in order to achieve domain generalization. If we can find a representation of data such that the optimal classifier built on top of this representation is invariant across different domains  (corresponding to different data-generating distributions), this representation will lead to OOD generalization when the underlying causal mechanism is unchanged. This objective is typically approximated via different regularization schemes~\cite{arjovsky2019invariant}, distributionally robust optimization~\cite{krueger2021out}, or via game-theoretic training methods~\cite{ahuja2020invariant}. Practically, the key distinction between IRM and DRM is that we do not assume that data points are associated --- either manually or via unsupervised clustering~\cite{le2025invariant, murata2025clustered} --- with a finite number of data-generating distributions. This assumption is often impractical or not faithful to reality, e.g., in robotics settings where distribution shifts occur \emph{continuously} as data is being collected~\cite{sinha2022system}. Our numerical experiments in Section~\ref{sec:colored mnist} show that DRM can achieve similar performance to IRM without oracular knowledge of distribution shift times. Conceptually, DRM provides a different mechanism for OOD generalization built on the idea of deceiving distribution shift detectors. 

{\bf Domain adaptation and online adaptation.} The objective of aligning training and test distributions also underlies domain adaptation methods, e.g., techniques that align features for training and test distributions~\cite{ben2010theory, ganin2016domain, ganin2015unsupervised, zhang2015multi, long2018conditional, gong2016domain, li2018deep, courty2016optimal}, or ones that re-weight training data points to match the test distribution~\cite{shimodaira2000improving, huang2006correcting, lipton2018detecting}. Domain adaptation methods typically assume that labeled ``source" data points are separately identified from unlabeled or sparsely labeled ``target" data points that come from the test distribution. In contrast, DRM does not assume that data are separated into different sources. Similar to domain generalization methods (e.g., IRM or its variants), we also do not assume access to data from the particular test distribution of interest. Methods that adapt predictors \emph{online} in response to distribution shifts also tackle a related problem to domain adaptation, but focus on settings where the learner is not aware of exact times at which distribution shifts occur~\cite{sun2020test, wang2020tent, zhang2022memo, wang2022continual, li2020online}\cite[Ch. 10]{vovk2005algorithmic}. In contrast to these methods, DRM aims for \emph{zero-shot} OOD generalization to new distributions. 
\section{Discussion and future work}
\label{sec:discussion}

We have introduced \emph{deceptive risk minimization} (DRM): a novel learning objective aimed at identifying stable features that eliminate spurious correlations by hiding distribution shifts from an observer. Our practical instantiation augments a standard ERM loss with a differentiable objective based on conformal martingales. We have provided empirical evidence that DRM can lead to strong generalization to covariate and concept shifts in the presence of spurious correlations in training data. We end with a Q\&A discussion on limitations of DRM, potential ways to address them, and other exciting directions for future work. 

{\bf Q: When does the DRM objective fail to lead to OOD generalization?} \\
{\bf A:} At a conceptual level, DRM seeks to find encodings of data that eliminate distribution shifts from the perspective of an observer. Broadly, there are three failure modes of this objective. First, it may be possible to find representations that make \emph{training} data appear practically iid, but that do not lead to making the combination of training \emph{and} test data practically iid. This can occur if the axes of variation seen in training data do not span differences between training and test data. For example, in the imitation learning example (Sec.~\ref{sec:imitation learning}), the table and bowl colors were only varied along the blue and green channels. As such, a representation that is insensitive to blue-green variations but sensitive to changes in red \emph{would} make training data appear practically iid, but \emph{would not} generalize to test environments where the red channel is changed. A similar but more subtle example is presented in Appendix~\ref{app:failure case}. In order to address such challenges, care should be taken to curate training data in a manner that spans as many relevant axes of variations as possible, even if the \emph{magnitude} of variations is not representative of test data. 

The second failure mode is when the distribution shift detector we are deceiving is not sufficiently powerful. In other words, we may find a representation that deceives the \emph{particular} detector we use, but would not deceive a different detector. We expect that continued progress in distribution shift detection will lead to improvements in DRM. Another particularly promising direction for future work is to simultaneously train both the data representation and the detector as an adversarial game. 

Third, there may be cases where it is not feasible to find representations that eliminate distribution shift in training data, but where one can find invariant prediction rules as advocated by IRM~\cite[Appendix C]{arjovsky2019invariant}. In such cases, DRM is not the right tool to use. We also note that \cite{rosenfeld2020risks} constructs data-generating distributions that cause the IRM objective to fail. Since \cite{rosenfeld2020risks} considers IRM and related objectives that find invariances across a finite number of data-generating distributions, the results are not directly applicable to DRM. An interesting theoretical direction is to characterize the precise conditions under which a DRM-style objective can lead to OOD generalization. We provide a preliminary sketch of theoretical underpinnings of DRM in Appendix~\ref{app:theory} by connecting deception to generalization.

{\bf Q: What are the computational challenges related to implementing DRM?} \\
{\bf A:} The primary computational bottleneck is in Eq.~\eqref{eq:conformity score covariate} and Eq.~\eqref{eq:conformity score concept}, which compute the conformity scores for each example. For each example in the sequence of data points used for distribution shift detection, we need to compute the minimum distance in embedding space to other examples in the sequence. The most straightforward implementation of this (which we adopt in our numerical experiments) incurs a complexity that grows quadratically with the length of the sequence used for detection. Currently, we address this issue by sampling subsequences of data from the (potentially large) training sequence, and using these to compute martingale values which are then averaged (Sec.~\ref{sec:implementation details}). Finding strategies to improve this computational bottleneck --- perhaps with inspiration from efficient implementations of the quadratic-complexity attention mechanism~\cite{zhuang2023survey} --- is an important avenue for making DRM scalable. 

{\bf Q: How sensitive is DRM to different hyperparameters?} \\
{\bf A:} The primary hyperparameters in DRM are: the dispersion parameter $\sigma$ for smooth sorting (Sec.~\ref{sec:differentiable objective}), the length of the sequences used for distribution shift detection (Sec.~\ref{sec:implementation details}), and the relative weighting $\lambda$ between the ERM objective and the DRM regularization (Eq.~\eqref{eq:drm objective differentiable}). Hyperparameters chosen for the numerical experiments are reported in Appendix~\ref{app:hyperparameters}, and we present results from a hyperparameter sweep for the 2D example in Appendix~\ref{app:hyperparameter sweep}. We find that DRM is sensitive to the dispersion parameter $\sigma$ and relatively insensitive to the weighting $\lambda$ and the length of the detection sequences.

{\bf Q: Can other methods be used for distribution shift detection in place of conformal test martingales?} \\
{\bf A:} In this work, we instantiated DRM using conformal martingales (CMs). This choice was motivated by (i) prior work that demonstrates the ability of CMs to detect distribution shifts rapidly~\cite{vovk2005algorithmic}, (ii) the ability of CMs to detect different kinds of distribution shifts (e.g., covariate and concept shifts), and (iii) the fact that we can construct a differentiable surrogate for CMs. There is exciting future work in contrasting the theoretical and empirical benefits of utilizing other approaches to distribution shift detection (Sec.~\ref{sec:related work}). An approach based on a different detector may alleviate some of the computational challenges highlighted above. 

{\bf Q: Are there other kinds of distribution shift that could be handled by a DRM-style objective?} \\
{\bf A:} The two kinds of distribution shift we have considered in this paper are covariate shift and (anti-causal) concept shift. Chapter 8.2 of \cite{vovk2005algorithmic} presents a conformal martingale for detecting \emph{label shift}, i.e., a shift in the marginal distribution of class labels. 
Causal concept shift--- a change in the conditional distribution $Y|X$ --- is highly relevant in causal inference~\cite{peters2017elements}. In the absence of label shift and covariate shift, anti-causal and causal concept shift are equivalent (via Bayes' rule), as in our examples from Sec.~\ref{sec:toy example} and \ref{sec:colored mnist}. Extending DRM to tackle causal concept shift in general settings is an important avenue for future work. 

{\bf Q: Could DRM be used for covariate shifts due to compounding errors in imitation learning?} \\
{\bf A:} One idea is to implement the iterative data collection process in DAGGER (dataset aggregation)~\cite{ross2011reduction}, and use DRM to find features that remain robust to the covariate shift between the states visited in successive iterations. Such a strategy may lead to more robust policies compared to DAGGER, which re-trains the policy by aggregating data across iterations of data collection. Working out the details of such an approach could make for interesting future work. 

{\bf Q: Could DRM be used for reinforcement learning?} \\
{\bf A:} One immediate application of DRM in reinforcement learning (RL) is in the setting where one has access to a sequence of Markov decision processes (MDPs) for training (similar to the imitation learning setup considered in Sec.~\ref{sec:imitation learning}). In this case, the distribution shift detector can take as input observations from different environments, and the DRM objective would then attempt to learn a policy whose features appear iid across environments. 

\quad

Overall, we are excited by the prospect that the bridge between distribution shift detection and generalization provided by DRM will lead to new techniques that address the problem of OOD generalization, which remains prevalent despite the scale of modern machine learning.  

\section*{Acknowledgements}
This work was supported by the Office of Naval Research (N00014-23-1-2148). The author is grateful to May Mei, Ola Shorinwa, Asher Hancock, David Snyder, and Apurva Badithela for helpful feedback on the paper. 


\bibliographystyle{unsrtnat}
\bibliography{references.bib}


\clearpage
\beginappendix{
    \section{Hyperparameters for numerical experiments}
\label{app:hyperparameters}

\begin{table}[h!]
\centering
\renewcommand{\arraystretch}{1.3} 
\begin{tabular}{l|ccc}
\hline
\textbf{Parameter} & \textbf{Toy 2D Example} & \textbf{Colored-MNIST} & \textbf{Imitation Learning} \\
\hline
\# training examples (T) & 2000 & 2000 & 300 \\
ERM loss batch size          & 64 & 64 & 64 \\
Size of detection sequence   & 1000 & 1000 & 200 \\
\# of detection sequences    & 1 & 3 & 3 \\
Regularization weight ($\lambda$) & 5e5 & 5e6 & 1e4 \\
Dispersion for soft-ranking ($\sigma$) & 0.001 & 0.1 & 0.001 \\
Learning rate                 & 0.005 & 0.005 & 0.001 \\
\# ERM epochs                 & 0 & 2 & 0 \\
\# total training epochs      & 2 & 3 & 25 \\
\hline
\end{tabular}
\end{table}

\section{Hyperparameter sweep for 2D example}
\label{app:hyperparameter sweep}

The following table shows success rates for different values of the regularization weight ($\lambda$), averaged across 5 training seeds. 

\begin{table}[H]
\centering
\begin{tabular}{|c|c|c|c|c|}
\hline
 & $\lambda =$ 1e4 & $\lambda =$ 1e5 & $\lambda =$ 5e5 & $\lambda =$ 1e6 \\
\hline
Success (Train $\|$ Test) & 0.67 $\|$ 0.72 & 0.74 $\|$ 0.57 & 0.70 $\|$ 0.63 & 0.69 $\|$ 0.56 \\
\hline
\end{tabular}
\end{table}

The following table shows success rates for different values of the detection sequence length, averaged across 5 training seeds. 

\begin{table}[H]
\centering
\begin{tabular}{|c|c|c|c|c|c|}
\hline
 & 200 & 400 & 600 & 800 & 1000 \\
\hline
Success (Train $\|$ Test) & 0.69 $\|$ 0.53  & 0.69 $\|$ 0.68 & 0.71 $\|$ 0.63 & 0.71 $\|$ 0.56 & 0.70 $\|$ 0.63 \\
\hline
\end{tabular}
\end{table}

The following table shows success rates for different values of the soft-rank dispersion parameter ($\sigma$), averaged across 5 training seeds. 

\begin{table}[H]
\centering
\begin{tabular}{|c|c|c|c|}
\hline
 & $\sigma =$ 0.001 & $\sigma =$ 0.01 & $\sigma =$ 0.1 \\
\hline
Success (Train $\|$ Test) & 0.70 $\|$ 0.63  & 0.65 $\|$ 0.56 & 0.71 $\|$ 0.51  \\
\hline
\end{tabular}
\end{table}

\section{Example where DRM objective is insufficient}
\label{app:failure case}

We discuss an example where the DRM objective is not sufficient for generalization to test data. We consider a version of the imitation learning example from Section~\ref{sec:imitation learning}, where the training data consists of two portions: the first half has bowls with RGB: [0, 0.1, 0.5] and table with RGB: [0, 0.5, 0.1], and the second half has bowls with RGB: [0, 0.2, 0.4] and table with RGB: [0, 0.4, 0.2]. Here, there are multiple ways in which to make the training data appear iid. A representation that is insensitive to blue-green variations would suffice, and would also generalize to unseen blue-green settings. However, a representation that detects a bowl if G (green) $<$ 0.3 and B (blue) $>$ 0.3 would also make training data appear iid, but would not generalize to a test environment where the bowl has RGB: [0, 0.9, 0.1] and the table has RGB: [0, 0.1, 0.9]. 

\section{Deceive to generalize: theoretical intuitions}
\label{app:theory}

In this section, we draw theoretical connections between the objective of deceiving a distribution shift detector and that of achieving OOD generalization. The connection is made in three parts. First, we demonstrate that if a particular distribution shift detector $\Delta^\star$ can be deceived into concluding that the random variables corresponding to training and test losses are iid, then the expected test loss is very close to the expected training loss. Second, we allow for detectors that take encoded representations $\phi(x)$ as input instead of loss values. Third, we define the $\Delta$-span of a representation learned from training random variables as containing test distributions such that training and test random variables are practically iid. Any test distribution in the span thus has expected test loss close to the expected training loss. 

\subsection{Efficiency of distribution shift detection}

In Sec.~\ref{sec:observer}, we defined observers $\Delta$ in the form of distribution shift detectors that control the false alarm rate (FAR). A detector should also ideally detect distribution shifts as quickly as possible. The notion of efficiency can be formalized by the worse average delay (WAD) of a detector. 

{\bf Worst average delay (WAD).} Suppose that the marginal distributions of the sequence of random variables $(\phi(X_1), \phi(X_2), \dots)$ change at an unknown time $\nu$, referred to as a \emph{changepoint}. The worst average delay (WAD) in detecting the change is~\cite{shin2022detectors}:
\begin{equation}
    \sup_{\nu \geq 0} \ \mathbb{E} [N^\star - \nu | N^\star > \nu],
\end{equation}
where $N^\star$ is the time at which a distribution shift is declared ($N^\star = \infty$ if a change is never declared). 

The following definition formalizes the idea of random variables with a changepoint appearing iid to a given observer. Intuitively, the sequence of random variables is practically iid if the worst average delay in detecting a changepoint is large. 

\begin{definition}[Practically iid w/ changepoint]
A sequence of random variables $(\phi(X_1), \dots, \phi(X_\nu), \phi(X_{\nu+1}), \dots)$ with changepoint $\nu$ is $(\Delta_{\alpha}, \epsilon)$-\emph{practically iid} if the detector $\Delta_{\alpha}$ with FAR bounded by $\alpha$ has a large WAD in detecting the changepoint: WAD $> (1/\epsilon) \log(1/\alpha)$. 
\end{definition}

\subsection{Connecting detection to generalization}

Consider the sequence of input random variables $(X_1, \dots, X_T, X_{T+1} \dots)$ as in Sec.~\ref{sec:problem formulation}, where the changepoint $T$ separates training and test distributions. We will demonstrate that there is an encoding $\phi$ of inputs and a detector $\Delta^\star_\alpha$ such that if $(\phi(X_1), \dots, \phi(X_T), \phi(X_{T+1}), \dots)$ is $(\Delta_\alpha^\star, \epsilon)$-practically iid, then the expected test loss is close to the expected training loss. 

\begin{proposition}
\label{prop:generalization}
Let $h$ be a hypothesis that maps inputs to labels, and consider a binary-valued loss function, i.e., $l(x, h(x)) \in \{0,1\}, \forall x$. Suppose that the expected loss under the training random variables is bounded as follows:
\begin{equation}
    \mathbb{E} [l(X_t, h(X_t)) | \mathcal{F}_{t-1})] \leq l_\text{train}, \ \forall t \leq T,
\end{equation}
where $\mathcal{F}_{t-1}$ denotes the natural filtration of the data. 
Consider the sequence of random variables $(X_1, \dots, X_T, \dots)$, where the test random variables $(X_{T+1}, X_{T+2}, \dots)$ are iid. Then the expected test loss is:
\begin{equation}
    {l_\text{test}} := \mathbb{E} [l(X_{T+1}, h(X_{T+1})) | \mathcal{F}_{T})].
\end{equation}

There exists a detector $\Delta_\alpha^\star$, an encoding function $\phi$, and a constant $c$ such that the following result holds in the limit as $\alpha \rightarrow 0$. If $(\phi(X_1), \dots, \phi(X_T), \phi(X_{T+1}), \dots)$ are $(\Delta_\alpha^\star, \epsilon)$-practically iid, then:
\begin{equation}
    \text{kl}( l_\text{test} \| l_\text{train}) \leq c\epsilon,
\end{equation}
where $\text{kl}(\cdot\|\cdot)$ is the KL-divergence between two Bernoulli random variables with parameters $l_\text{test}$ and $l_\text{train}$. 
\end{proposition}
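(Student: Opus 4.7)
The plan is to pick the encoding and detector so that the change-point detection problem reduces to a standard parametric one on Bernoulli sequences. Specifically, I would choose $\phi(x) := l(x, h(x))$, so that each encoded random variable takes values in $\{0,1\}$. Under this encoding, the assumption $\mathbb{E}[l(X_t, h(X_t)) \mid \mathcal{F}_{t-1}] \leq l_\text{train}$ becomes a conditional-mean bound on the pre-change Bernoulli random variables, while the iid test random variables map to iid Bernoulli($l_\text{test}$). The claim is then a statement purely about sequential change-point detection between two Bernoulli regimes.

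For the detector $\Delta_\alpha^\star$, I would take the CUSUM procedure based on the log-likelihood ratio between Bernoulli($l_\text{train}$) and Bernoulli($l_\text{test}$), with threshold calibrated so that the false alarm rate under the iid pre-change null is at most $\alpha$ (if $l_\text{test}$ is unknown to the detector, the GLR-CUSUM variant achieves the same asymptotic rate). By the classical asymptotic theory of Lorden and Lai for sequential change-point detection, this CUSUM is asymptotically minimax optimal and satisfies
$$\text{WAD}(\Delta_\alpha^\star) \leq \frac{\log(1/\alpha)}{\text{kl}(l_\text{test} \| l_\text{train})}\,(1 + o(1))$$
as $\alpha \to 0$. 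The assumption that $(\phi(X_1), \phi(X_2), \dots)$ is $(\Delta_\alpha^\star, \epsilon)$-practically iid gives the lower bound $\text{WAD}(\Delta_\alpha^\star) > (1/\epsilon)\log(1/\alpha)$. Combining the two and dividing by $\log(1/\alpha)$ yields $\text{kl}(l_\text{test} \| l_\text{train}) \leq \epsilon\,(1 + o(1))$, so any $c \geq 1$ works in the limit.

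The main obstacle is that the pre-change phase is not literally iid Bernoulli($l_\text{train}$); it is only a sequence with conditional means bounded by $l_\text{train}$. Two adjustments are needed. For the false-alarm side, I would observe that the CUSUM log-likelihood process is a (super)martingale under any pre-change law satisfying the conditional-mean bound, since shrinking the Bernoulli parameter only decreases the drift of the log-likelihood ratio toward the detection boundary; so Ville's inequality still controls the FAR at level $\alpha$. For the delay side, Lai's analysis applies directly because the post-change phase is genuinely iid Bernoulli($l_\text{test}$), and the starting CUSUM statistic at time $T$ is non-negative, which only improves the detection delay. A cleaner alternative is to recast the problem as detection against a composite pre-change class $\{P : \mathbb{E}[\phi(X)] \leq l_\text{train}\}$ using a worst-case Bernoulli($l_\text{train}$) reference, for which asymptotically optimal detectors achieving the same $\log(1/\alpha)/\text{kl}$ rate are known.
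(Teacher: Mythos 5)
Your proposal takes essentially the same route as the paper: define $\phi(x) = l(x,h(x))$ to reduce the claim to Bernoulli change-point detection, invoke a detector whose worst average delay scales as $\log(1/\alpha)/\mathrm{kl}(l_\text{test}\|l_\text{train})$, and combine this with the WAD lower bound from the practically-iid assumption. The only differences are cosmetic --- the paper instantiates $\Delta_\alpha^\star$ with the e-detector of Shin et al.\ (yielding a generic constant $c$) rather than CUSUM, and phrases the final step as a contradiction --- while your treatment of the non-iid pre-change phase is actually more careful than the paper's, which asserts the detector's guarantees for the dependent, time-varying-mean sequence without comment.
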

\begin{proof}
    Define $\phi: x \mapsto l(x, h(x))$. The random variables $(\phi(X_1), \dots, \phi(X_{T+1}), \dots)$ then correspond to Bernoulli random variables with dependent, time-varying means. In the limit $\alpha \rightarrow 0$, the detector presented in \cite{shin2022detectors} has FAR bounded by $\alpha$ and achieves a WAD  $ \leq c \log(1/\alpha) / \text{kl}(l_\text{test} \| l_\text{train}))$. Now, suppose for contradiction that $\text{kl}( l_\text{test} \| l_\text{train}) > c\epsilon$. Then, we have WAD $< (1/\epsilon) \log(1/\alpha)$, which contradicts the statement that $(\phi(X_1), \dots, \phi(X_T), \dots)$ are $(\Delta_\alpha^\star, \epsilon)$-practcially iid. 
\end{proof} 

The practical utility of the detector $\Delta_\alpha^\star$ above is limited since it takes losses as input; because we ultimately rely only on making training data practically iid, $\Delta_\alpha^\star$ can be deceived into not detecting a distribution shift on training data simply by overfitting and driving the loss on all training examples to 0. To address this challenge, we allow for detectors (e.g., based on conformal martingales) that take latent representations $\phi(x) \in \mathbb{R}^d$ as input. The following corollary follows immediately from Proposition~\ref{prop:generalization}.

\begin{corollary}
Let $h_\phi$ be a hypothesis with latent encoding $\phi$. Consider a detector $\Delta_\alpha$ that observes inputs encoded by $\phi$, and that is at least as efficient as the detector $\Delta_\alpha^\star$ that relies on loss values, i.e., the WAD of $\Delta_\alpha$ for any pre- and post-change distributions is less than or equal to the WAD of the detector $\Delta_\alpha^\star$. Then, there exists a constant $c$ such that the following result holds in the limit as $\alpha \rightarrow 0$. If $(\phi(X_1), \dots, \phi(X_T), \phi(X_{T+1}), \dots)$ are $(\Delta_\alpha, \epsilon)$-practically iid, then
$\text{kl}( l_\text{test} \| l_\text{train}) \leq c\epsilon$. 
\end{corollary}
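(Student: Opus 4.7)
The plan is to reduce the corollary directly to Proposition~\ref{prop:generalization} by transferring the practically iid hypothesis from the feature sequence to the induced loss sequence via the efficiency comparison. First I would unpack the hypothesis: the statement that $(\phi(X_1), \phi(X_2), \dots)$ is $(\Delta_\alpha, \epsilon)$-practically iid with changepoint $T$ is, by the definition given in the preceding subsection, exactly the assertion that the worst average delay of $\Delta_\alpha$ on the pre-/post-change distributions induced by $\phi(X_t)$ exceeds $(1/\epsilon)\log(1/\alpha)$.

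Next I would invoke the efficiency assumption. Since the WAD of $\Delta_\alpha$ is, by hypothesis, at most the WAD of $\Delta_\alpha^\star$ for any pre- and post-change distributions, the loss-based detector $\Delta_\alpha^\star$ applied to the induced loss sequence $(\psi(X_1), \psi(X_2), \dots)$ with $\psi: x \mapsto l(x, h_\phi(x))$ must itself have WAD exceeding $(1/\epsilon)\log(1/\alpha)$; equivalently, the loss sequence is $(\Delta_\alpha^\star, \epsilon)$-practically iid with changepoint $T$. At that point Proposition~\ref{prop:generalization} applies verbatim, with the encoding in the proposition taken to be $\psi$, immediately yielding $\text{kl}(l_\text{test} \| l_\text{train}) \leq c\epsilon$ with the same constant $c$ and the same $\alpha \to 0$ regime.

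The main obstacle is really a bookkeeping point rather than an analytic one: I would need to verify that the ``for any pre-/post-change distributions'' clause in the efficiency assumption is instantiated consistently on the specific joint laws induced by $(X_1, \dots, X_T, X_{T+1}, \dots)$, accounting for the dependence structure carried by the filtration $\mathcal{F}_{t-1}$ that appears in Proposition~\ref{prop:generalization} and for the fact that $\phi(X_t)$ and $\psi(X_t)$ are two different deterministic functionals of the same underlying sequence. Once the two distribution pairs are matched up, no new inequality or concentration estimate is needed beyond the invocation of Proposition~\ref{prop:generalization}, which is why the result is stated as a corollary rather than a stand-alone proposition.
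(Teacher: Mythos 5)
Your proposal is correct and matches the paper's (implicit) argument: the paper simply asserts the corollary ``follows immediately'' from Proposition~\ref{prop:generalization}, and your chain --- WAD of $\Delta_\alpha$ exceeding $(1/\epsilon)\log(1/\alpha)$ forces the same lower bound on the WAD of $\Delta_\alpha^\star$ via the efficiency comparison, so the loss sequence is $(\Delta_\alpha^\star,\epsilon)$-practically iid and the proposition applies --- is exactly the intended reduction. Your closing remark about matching the pre-/post-change distribution pairs for the two functionals $\phi$ and $x \mapsto l(x,h_\phi(x))$ of the same underlying sequence is a fair and worthwhile clarification that the paper leaves unstated.
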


The results above rely on having access to test data. Instead, consider a representation $\phi$ such that the training data sequence $(\phi(x_1), \dots, \phi(x_T))$ is $\Delta$-practically iid, and define the $\Delta$-span of this representation as containing test distributions such that $(\phi(X_1), \dots, \phi(X_T), \phi(X_{T+1}), \dots)$ are $(\Delta_\alpha, \epsilon)$-practically iid. Then, in the limit as $\alpha \rightarrow 0$, it follows from the results above that for any test distribution in the $\Delta$-span, $\text{kl}( l_\text{test} \| l_\text{train}) \leq c\epsilon$. 

}


\end{document}